\documentclass[11pt,a4paper]{article}

\usepackage[utf8]{inputenc}
\usepackage[T1]{fontenc}
\usepackage{amsmath, amssymb, amsthm, amsfonts}
\usepackage{mathtools}
\usepackage{geometry}
\usepackage{hyperref}
\usepackage{cite}
\usepackage{booktabs}
\usepackage{graphicx}
\usepackage{bm}
\usepackage{authblk}

\title{\textbf{Informational Frustration in Neural Manifolds: Shannon Bottlenecks and the Limits of Learnability}}

\author[1]{Srinivasa Rao P\thanks{Corresponding author: drpsrao3@gmail.com}}
\author[2]{Vangmayi P Reddy}
\affil[1]{Curlvee Technolabs, India, Former Scientist, C-DAC }
\affil[2]{Indian Institute of Technology Madras, India}

\date{\today}

\theoremstyle{plain}
\newtheorem{theorem}{Theorem}[section]
\newtheorem{lemma}[theorem]{Lemma}

\theoremstyle{definition}
\newtheorem{definition}[theorem]{Definition}
\newtheorem{postulate}[theorem]{Postulate}

\theoremstyle{remark}

\newcommand{\calX}{\mathcal{X}}
\newcommand{\calY}{\mathcal{Y}}
\newcommand{\calD}{\mathcal{D}}
\newcommand{\calF}{\mathcal{F}}
\newcommand{\calL}{\mathcal{L}}
\DeclareMathOperator{\Tr}{Tr}

\begin{document}

\maketitle

% ==================== ABSTRACT ====================
\begin{abstract}
Why overparameterized deep networks generalize so remarkably well remains one of the most stubborn open questions in machine learning theory. Classical frameworks like VC-dimension and Rademacher complexity predict catastrophic overfitting in modern models, leaving a massive theoretical gap between theory and reality. In this paper, we bridge this divide by introducing a unified framework that links information theory, topology, and statistical mechanics to map the hard limits of deep learning. Central to our approach is the \textbf{Entropic Learnability Horizon (ELH)}: a fundamental law stating that a network can only truly learn a target function if the Shannon entropy of the data manifold outpaces the topological entropy of the function's decision boundary, balanced by the von Neumann entropy of the network's weight space. We establish the \textbf{Shannon-Topological Bottleneck Theorem}, proving that when a target boundary's geometric complexity exceeds this informational horizon, the system undergoes a sudden \textbf{Entropic Phase Transition}. It falls into a state of \textit{Informational Frustration}---a glassy, rigid memorization phase where generalization becomes thermodynamically impossible. Using this lens, we show that the enigmatic phenomenon of ``grokking'' is actually an \textit{Entropic Release}, where weights abruptly reorganise to unlock the bottleneck. Finally, we translate this theory into practice with \textbf{Entropic Gradient Descent (EGD)}, an optimization algorithm that dynamically manages weight entropy to keep learning on track. Ultimately, this work repositions entropy not just as a tool for tracking uncertainty but as the fundamental physical currency that dictates whether a machine can learn.
\end{abstract}

\vspace{0.5cm}
\noindent \textbf{Keywords:} Information Theory, Topological Data Analysis, Statistical Mechanics, Deep Learning Theory, Generalization Bounds, Von Neumann Entropy.

% ==================== 1. INTRODUCTION ====================
\section{Introduction: The Crisis of Learnability in the Overparameterized Regime}
\label{sec:intro}

For decades, statistical learning theory relied safely on structural risk minimization. Classic paradigms, such as Vapnik-Chervonenkis (VC) dimension and Rademacher complexity, offered a straightforward rule: to guarantee generalization, a model's capacity must be strictly reined in by the size of its training data \cite{vapnik1971uniform, bartlett2002rademacher}. Deep learning, however, completely shattered this worldview. Modern neural networks deploy millions or billions of parameters, wildly outnumbering their training samples. By all classical metrics, these systems should be engines of pure memorization, hopelessly overfitting to noise. Instead, they routinely generalize to unseen data with startling accuracy.

This gaping mismatch between classical theory and empirical success forces us to rethink what it means for a network to learn. Promising alternative angles, like the Information Bottleneck (IB) principle, suggest that deep networks succeed by compressing input data while holding onto core target features \cite{tishby2015deep}. Yet, the IB framework largely treats the network as a black-box Markov chain. It glosses over the intricate, high-dimensional geometry of the loss landscape and ignores the specific thermodynamic behavior of the weights during optimization.

In this work, we argue that learnability is not a simple game of parameter counting or data volume. It is a physical tug-of-war governed by deep thermodynamic and information-theoretic constraints. Specifically, we propose that the learning trajectory is shaped by the interplay of three distinct forms of entropy:
\begin{enumerate}
    \item \textbf{Shannon Entropy ($H_S$):} The baseline informational density and uncertainty baked into the data manifold.
    \item \textbf{Topological Entropy ($H_T$):} The geometric complexity, curvature, and fractal footprint of the target function's decision boundary.
    \item \textbf{Von Neumann Entropy ($S_{vN}$):} The statistical and structural uncertainty of the network's weight distribution as it navigates stochastic optimization.
\end{enumerate}

By weaving these three threads together, we establish the \textbf{Entropic Learnability Horizon (ELH)}. We mathematically demonstrate that when the topological complexity of a target problem overpowers the combined informational capacity of the data and the structural flexibility of the weights, learning grinds to a halt. At this exact threshold, the network experiences \textit{Informational Frustration}---becoming trapped in a glassy, disordered phase where it can only memorize, never understand \cite{mezard1987spin}.

% ==================== 2. MATHEMATICAL PRELIMINARIES ====================
\section{Mathematical Preliminaries: The Triad of Entropies}
\label{sec:prelim}

To build the foundation for the Entropic Learnability Horizon, we must first formalize the three entropic pillars that control our learning system. Let $\calX \subset \mathbb{R}^d$ represent the input space, $\calY$ the label space, and let $\calD = \{(x_i, y_i)\}_{i=1}^N$ be the training dataset sampled from an unknown joint distribution $P(X, Y)$. We consider a neural network $f_\theta: \calX \to \calY$ parameterized by a high-dimensional weight vector $\theta \in \mathbb{R}^K$.

\subsection{Shannon Entropy of the Data Manifold}
The Shannon entropy measures the baseline epistemic uncertainty present within our data distribution \cite{shannon1948mathematical}. For a continuous data manifold $\mathcal{M}_\calD$ described by a probability density function $p(x)$, the differential Shannon entropy is given by:
\begin{equation}
    H_S(\calD) = - \int_{\calX} p(x) \log p(x) dx
\end{equation}
From the perspective of learnability, $H_S(\calD)$ acts as the raw ``informational fuel'' available to our system.

\subsection{Topological Entropy of the Decision Boundary}
While Shannon entropy maps out how data is spread across space, it tells us nothing about the intricate shape of what we are trying to learn. To capture the true difficulty of the task, we need a metric for the geometric complexity of the target function $f^*: \calX \to \calY$. We find this by evaluating the topological entropy of the decision boundary $\partial \calF$, the hypersurface separating different classes in feature space.

Let $\partial \calF$ be a $(d-1)$-dimensional hypersurface in $\mathbb{R}^d$. We define its topological entropy $H_T(\partial \calF)$ through its Minkowski-Bouligand (fractal) dimension $D_f$ \cite{mandelbrot1982fractal}. If we cover this boundary with hypercubes of side length $\epsilon$, and $N(\epsilon)$ is the minimum number of cubes required, the fractal dimension is:
\begin{equation}
    D_f = \lim_{\epsilon \to 0} \frac{\log N(\epsilon)}{\log(1/\epsilon)}
\end{equation}
We then define the topological entropy as the exponential growth rate of this covering number, weighted by the boundary's intrinsic curvature:
\begin{equation}
    H_T(\partial \calF) = \lim_{\epsilon \to 0} \frac{1}{|\log \epsilon|} \log \left( \int_{\partial \calF} \kappa(x) d\sigma(x) \right)
\end{equation}
where $\kappa(x)$ denotes the local Gaussian curvature and $d\sigma$ represents the surface measure. Crucially, $H_T$ serves as a direct indicator of how tangled, convoluted, or fractured the decision boundary is.

\subsection{Von Neumann Entropy of the Weight Distribution}
When we train a network using Stochastic Gradient Descent (SGD), the weights $\theta$ do not simply freeze into a single, static point. Instead, the noise inherent in mini-batch sampling forces them to explore a steady-state distribution $\rho(\theta)$. We can treat this collective weight distribution as a density matrix operating within a high-dimensional Hilbert space \cite{neumann1955mathematical}.

Let $\Sigma$ be the covariance matrix tracking this weight distribution $\rho(\theta)$ at the steady state of training. The von Neumann entropy of the network's weight configuration is defined as:
\begin{equation}
    S_{vN}(\rho) = - \Tr(\rho \log \rho) = \frac{1}{2} \log \left( (2\pi e)^K \det(\Sigma) \right)
\end{equation}
where $K$ represents the total parameter count. Ultimately, $S_{vN}$ measures the \textit{configurational freedom} or structural flexibility of the network architecture during optimization.

% ==================== 3. THE ENTROPIC LEARNABILITY HORIZON ====================
\section{The Entropic Learnability Horizon and Informational Frustration}
\label{sec:elh}

With these three tools in hand, we can now assemble a unified theory of learnability. The foundational intuition here is simple: learning is an act of \textit{informational work}. To sculpt a network's decision boundary until it aligns with a target function, the optimization process must burn informational currency.

\subsection{Defining the Entropic Learnability Horizon (ELH)}
We define the \textbf{Entropic Learnability Horizon ($\Lambda$)} as the maximum topological complexity a neural network can successfully parse given a specific dataset and architectural design.

\begin{definition}[Entropic Learnability Horizon]
    \label{def:elh}
    The learnability horizon $\Lambda$ of a neural network $f_\theta$ trained on a dataset $\calD$ is the sum of the data's Shannon entropy and the network's von Neumann weight entropy:
    \begin{equation}
        \Lambda(\calD, \rho) = H_S(\calD) + S_{vN}(\rho)
    \end{equation}
\end{definition}

\begin{postulate}[The Horizon Principle]
    \label{post:horizon}
    A target function $f^*$ with a decision boundary $\partial \calF$ is learnable by $f_\theta$ if and only if the topological entropy of that boundary does not overrun the learnability horizon:
    \begin{equation}
        H_T(\partial \calF) \le \Lambda(\calD, \rho)
    \end{equation}
    If $H_T(\partial \calF) > \Lambda$, the task crosses the line into being \textit{entropically unlearnable}.
\end{postulate}

\subsection{The Principle of Informational Frustration}
What actually happens when we force a network to learn a problem that breaks the Horizon Principle? We can look to statistical mechanics for an answer. When a physical system is hit with conflicting geometric constraints that it cannot resolve, it experiences \textit{geometrical frustration} \cite{mezard1987spin}. 

We argue that deep networks suffer from an identical bottleneck, which we call \textbf{Informational Frustration}. When $H_T(\partial \calF) > H_S(\calD) + S_{vN}(\rho)$, the network cannot find a smooth, adaptable decision boundary to minimize its empirical risk. Trapped by this mathematical barrier, the network has only one escape route to drive training loss to zero: it must abandon structural learning and blindly \textit{memorize} individual data points. This shatters the decision boundary into a fragmented, high-fractal mess. The loss landscape turns jagged, littered with deep, isolated local minima. The network falls into a \textbf{Glassy Memorization Phase}, and generalization dies.

% ==================== 4. THE SHANNON-TOPOLOGICAL BOTTLENECK THEOREM ====================
\section{The Shannon-Topological Bottleneck Theorem}
\label{sec:theorem}

In this section, we lay out the formal mathematical verification of the Entropic Learnability Horizon.

\subsection{Preliminary Lemmas}
\begin{lemma}[Information-Theoretic Lower Bound on Boundary Complexity]
    \label{lem:info_bound}
    Let $f^*: \calX \to \calY$ be a target function. The mutual information $I(X; Y)$ between the inputs and their labels is bounded below by the topological entropy of the decision boundary $\partial \calF$, scaled by a geometric factor $\gamma$ dependent on the input dimension $d$:
    \begin{equation}
        I(X; Y) \ge \gamma(d) \cdot H_T(\partial \calF)
    \end{equation}
\end{lemma}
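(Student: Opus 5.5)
The approach is to make both sides of the inequality finite-resolution quantities and compare them at a common scale $\epsilon$, letting $\epsilon \to 0$ only at the end. Fix $\epsilon>0$ and partition $\calX$ into the hypercubes of side $\epsilon$ that appear in the definition of $D_f$. Write $X_\epsilon$ for the cell containing $X$. The data-processing inequality gives $I(X;Y) \ge I(X_\epsilon;Y)$, so a lower bound on the quantized mutual information suffices.

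Next I would split the cells into two families. The \emph{interior cells} lie entirely inside one class region of $f^*$. The \emph{boundary cells} meet $\partial\calF$, and there are $N(\epsilon)$ of them. The plan is to show that each boundary cell contributes a definite amount of label information. This rests on two assumptions: $p(x)$ is bounded below on a neighbourhood of $\partial\calF$, and the class regions on either side of the boundary each carry mass comparable to $\epsilon^d$ inside every boundary cell.

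Under those assumptions, $I(X_\epsilon;Y)$ is at least a constant multiple of the mass of the boundary cells times a per-cell binary-entropy term. I would then compare this with the curvature integral in the definition of $H_T$. The Gauss--Bonnet/Crofton-type estimate I intend to use is
\begin{equation}
\int_{\partial\calF} \kappa\, d\sigma \;\lesssim\; c(d)\, N(\epsilon)\,\epsilon^{d-1}\,\epsilon^{-2}
\end{equation}
at scale $\epsilon$. Taking logarithms and dividing by $|\log\epsilon|$ would then relate $H_T(\partial\calF)$ to $D_f$ up to an additive dimensional constant, and the factor $\gamma(d)$ would absorb $c(d)$ and the density constants.

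The main obstacle is a normalisation mismatch. For a deterministic $f^*$ with finite $\calY$ we have $I(X;Y)=H(Y)\le \log|\calY|$, which is bounded. By contrast, $H_T$ as defined is a scale exponent, and it can be as large as roughly $d$ for a space-filling boundary. So a single $\epsilon$-independent $\gamma(d)$ cannot hold without further hypotheses.

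My first attempt at a fix would be to interpret $I(X;Y)$ at resolution $\epsilon$ in normalised form, namely $I(X_\epsilon;Y)/|\log\epsilon|$, or equivalently as a rate over $N$ samples. I would then choose $\gamma(d)$ small enough, of order $\log|\calY|/d$, so that the inequality is implied by $H_T \le d$. If that rescaling is not admissible, I would instead state the lemma under an explicit regularity assumption: bounded total curvature together with a density lower bound. In that regime $H_T$ stays bounded and the constant $\gamma(d)$ can be chosen to make the inequality true.

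I expect the delicate point to be verifying that the paper's definition of $H_T$ yields a nontrivial finite limit at all. If the curvature integral is finite and independent of $\epsilon$, the limit is $0$ and the lemma becomes trivial. The argument therefore has to track the $\epsilon$-regularised curvature at the covering scale.
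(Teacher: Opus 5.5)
Your central step runs in the wrong direction: boundary cells do not contribute label information, they remove it. Since $I(X_\epsilon;Y)=H(Y)-H(Y\mid X_\epsilon)$ and $H(Y\mid X_\epsilon=c)=0$ on interior cells, the boundary cells are exactly where the quantized input fails to determine the label. Your two density assumptions therefore give
\[
I(X_\epsilon;Y)\;\le\;H(Y)-c\,P\bigl(X_\epsilon\ \text{is a boundary cell}\bigr),
\]
which is an \emph{upper} bound, and it shrinks as $\partial\calF$ becomes more convoluted. More fundamentally, as you observed yourself, $I(X;Y)=H(Y)$ for a deterministic $f^*$ (and $I(X;Y)\le H(Y)$ with label noise). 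This depends only on the class masses, not on the shape of $\partial\calF$, so no quantization argument can give a lower bound that grows with boundary complexity. The curvature step has a separate defect. If $\kappa$ is the Gauss--Kronecker curvature (a product of $d-1$ principal curvatures), then $\int_{\partial\calF}\kappa\,d\sigma$ is scale-free: for a smooth compact boundary $\partial W$ it equals $\chi(W)\,|S^{d-1}|$ by Hopf's theorem on the degree of the Gauss map. The factor $\epsilon^{d-1}\epsilon^{-2}$ in your estimate is thus dimensionally consistent only when $d=3$.

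Consequently the lemma holds only where it is trivial, and your repairs do not escape this. With the paper's definition the curvature integral does not depend on $\epsilon$, so, as you noticed, $H_T=0$ whenever the integral is finite and positive, and the claim reduces to $I(X;Y)\ge 0$, i.e.\ nonnegativity of mutual information. If the integral diverges, $H_T=+\infty$ and the claim is false for finite $\calY$ and any $\gamma(d)>0$. If the integral is $\le 0$, as for a torus-shaped boundary in $\mathbb{R}^3$, $H_T$ is $-\infty$ or undefined. Your bounded-curvature regime is exactly the trivial case, where none of the covering machinery is needed.

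Taking $\gamma(d)$ of order $\log|\calY|/d$ conflates $\log|\calY|$ with $H(Y)$. Let $X$ be uniform on $[0,1]^d$ (so your density lower bound holds) and let class $1$ be a scaled copy $rK$ of a fixed region $K$. The box dimension and total curvature of the boundary do not depend on $r$, yet $I(X;Y)=H(Y)\to 0$ as $r\to 0$. So no $\gamma>0$ depending only on $d$ can work under any scale-invariant reading of $H_T$. Normalizing by $|\log\epsilon|$ only drives the left side to $0$.

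The paper's own sketch (co-area formula; a winding boundary ``demands large gradients, which in turn forces a high mutual information floor'') rests on the same false monotonicity, so it does not supply the missing step either. A non-trivial statement of this type needs a different left-hand side. One candidate is the information needed to single out $f^*$ within a class of candidate targets at resolution $\epsilon$, which can legitimately grow with $N(\epsilon)$. That is closer to how the main theorem actually uses the lemma, namely as $I(\calD;f^*)\ge\gamma(d)\,H_T(\partial\calF)$.
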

\begin{proof}[Proof Sketch]
    Using the co-area formula from geometric measure theory, integrating the gradient of a function over its domain directly maps to the surface area of its level sets. Because mutual information naturally constrains the expected variance of the model, a highly winding boundary demands large gradients, which in turn forces a high mutual information floor.
\end{proof}

\begin{lemma}[Data Processing Inequality for Weight Distributions]
    \label{lem:dpi}
    Let $\rho(\theta)$ be the steady-state weight distribution after optimization. The von Neumann entropy $S_{vN}(\rho)$ sets a hard ceiling on the maximum mutual information that can travel from the data manifold to the network's final output:
    \begin{equation}
        I(\calD; f_\theta(X)) \le S_{vN}(\rho) + C_{arch}
    \end{equation}
    where $C_{arch}$ is a constant determined entirely by the network's architecture.
\end{lemma}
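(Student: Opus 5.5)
The plan is to read the inequality as a data-processing statement for the Markov chain $\calD \to \theta \to f_\theta(X)$. Here $\theta \sim \rho$ is the output of stochastic optimization run on $\calD$, and $X$ is a fresh input drawn independently of $\calD$. Conditional on $\theta$, the prediction $f_\theta(X)$ depends on $\calD$ only through $\theta$, so the chain is Markov. The classical data processing inequality then gives
\begin{equation*}
I(\calD; f_\theta(X)) \le I(\calD; \theta) = h(\theta) - h(\theta \mid \calD),
\end{equation*}
where $h$ denotes differential entropy. The proof therefore reduces to two separate bounds: an upper bound on $h(\theta)$ by $S_{vN}(\rho)$, and a lower bound $h(\theta\mid\calD) \ge -C_{arch}$.

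For the first bound I would use the maximum-entropy property of the Gaussian. Among all densities on $\mathbb{R}^K$ with covariance $\Sigma$, the Gaussian maximizes differential entropy. Hence
\begin{equation*}
h(\theta) \le \tfrac{1}{2}\log\left((2\pi e)^K \det \Sigma\right),
\end{equation*}
and this is exactly the closed form the paper takes for $S_{vN}(\rho)$ in Section~\ref{sec:prelim}. One point needs care. The right-hand side must use the covariance of the marginal law of $\theta$, averaged over datasets, and not the covariance of the steady state for a single fixed $\calD$. I would either define $\Sigma$ as the marginal covariance, or note that the marginal covariance dominates the expected conditional one (law of total variance). In the second case an extra term appears, and it would have to be absorbed or assumed away.

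The main obstacle is the second bound, $h(\theta\mid\calD) \ge -C_{arch}$. For continuous $\theta$ this is false in general: if training on $\calD$ were deterministic, the conditional differential entropy would be $-\infty$, and $I(\calD;\theta)$ would be unbounded. I would try two routes, in this order.
\begin{enumerate}
\item \emph{Finite precision.} Model the weights as stored at finite precision, with $b$ bits per parameter. Then $\theta$ lives on a grid of cell volume $2^{-bK}$. Comparing discrete and differential entropies, $I(\calD;\theta) \le H(\theta_{\text{quant}})$, which is at most $h(\theta) + bK\log 2$ up to the usual quantization correction. This gives $C_{arch} = bK\log 2$ plus lower-order terms, a constant fixed by the parameter count and numeric format, which matches the role the statement assigns to $C_{arch}$.
\item \emph{Noise floor.} If SGD is instead modeled as a Langevin-type process whose conditional steady state has covariance bounded below by $\sigma_{\min}^2 I_K$, then $h(\theta\mid\calD)$ is bounded below in terms of $K\log\sigma_{\min}$. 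Making this rigorous needs a log-concavity or log-Sobolev assumption on the conditional steady state, which I would impose explicitly.
\end{enumerate}
Either route yields the stated inequality, with $C_{arch}$ depending only on $K$ and the precision or noise scale.
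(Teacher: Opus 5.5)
Your skeleton is the classical counterpart of the paper's sketch. The paper invokes the Holevo bound, accessible information $\le S(\bar\rho)-\sum_D p(D)\,S(\rho_D)\le S(\bar\rho)$, and drops the second term because it is nonnegative for finite-dimensional density matrices. Your split $I(\calD;\theta)=h(\theta)-h(\theta\mid\calD)$, combined with Gaussian maximum entropy, does the same job, with $\rho$ correctly read as the data-averaged state. The obstacle you isolate is exactly what the paper hides in $C_{arch}$: for the Gaussian closed form of $S_{vN}$, the dropped term has no sign. But the proposal does not close this obstacle, for two reasons.

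First, the marginal/conditional issue cannot be left to be ``absorbed.'' Suppose $\Sigma$ is the per-dataset steady-state covariance, say a common $\Sigma_c$ for Gaussian conditionals. Then the leftover term $\tfrac12\log\det\Sigma_{\mathrm{marg}}-\tfrac12\log\det\Sigma_c$ is itself an upper bound on $I(\calD;\theta)$, with equality in the jointly Gaussian case. Absorbing it is therefore circular, and you must define $\Sigma$ as the marginal covariance.

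Second, and more seriously, route 1 fails. Note first that $H(\theta_{\mathrm{quant}})$ bounds $I(\calD;\theta_{\mathrm{quant}})$, not $I(\calD;\theta)$, so the network must run on the stored weights. The real problem is the following identity, which holds exactly for a fixed grid of spacing $\Delta=2^{-b}$:
\[
H(\theta_{\mathrm{quant}})=h(\theta)+bK\log 2+D(p\,\|\,\bar p),
\]
where $\bar p$ is the cell-averaged density. So the ``quantization correction'' is $D(p\,\|\,\bar p)\ge 0$. It points the wrong way for you, and it is unbounded once $\theta$ concentrates below grid scale.

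For a concrete case, take $K\ge 2$ and $\theta=\mu(\calD)+\epsilon Z$, with $\mu(\calD)$ one of two weight vectors in different cells. Then $H(\theta_{\mathrm{quant}})\approx\log 2$, while $\tfrac12\log((2\pi e)^K\det\Sigma)=(K-1)\log\epsilon+O(1)\to-\infty$. Since the left side of the lemma is nonnegative, the statement with the paper's Gaussian formula is simply false without a floor on $\det\Sigma$; deterministic training already gives $0\le-\infty+C_{arch}$.

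Dithering gives the correct finite-precision bound: $H(\theta_{\mathrm{quant}})=h(\theta_{\mathrm{quant}}+U)+bK\log 2$, with $U$ uniform on a cell. This yields $S_{vN}$ evaluated at $\Sigma_{\mathrm{quant}}+\tfrac{\Delta^2}{12}I_K$ rather than at $\Sigma$. Returning to $\Sigma$ costs $\tfrac12\sum_i\log(1+\Delta^2/(12\lambda_i))$, which is bounded only if $\lambda_{\min}(\Sigma)\gtrsim\Delta^2$ — a noise floor again.

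So route 2's hypothesis is necessary, not optional. For each conditional law you need a covariance floor together with log-concavity (or an exactly Gaussian Langevin steady state). That gives $h(\theta\mid\calD)\ge\tfrac K2\log\sigma_{\min}^2-c_K$. A covariance floor alone is not enough, and neither is a log-Sobolev inequality: a thin spherical shell satisfies both with $h\to-\infty$.

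The resulting constant depends on $\sigma_{\min}$, i.e.\ on the optimizer's noise scale. What you prove is therefore a corrected lemma under added hypotheses, not the stated one with $C_{arch}$ ``determined entirely by the architecture.''
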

\begin{proof}[Proof Sketch]
    The network's weight distribution fundamentally behaves like a noisy information channel. Applying the quantum data processing inequality (Holevo bound), the accessible information extracted via the weights is strictly constrained by the von Neumann entropy of the weight state, plus the fixed capacity of the architecture.
\end{proof}

\subsection{The Main Theorem}
\begin{theorem}[The Shannon-Topological Bottleneck Theorem]
    \label{thm:bottleneck}
    For a neural network $f_\theta$ trained on dataset $\calD$ to successfully internalize a target function $f^*$ and generalize smoothly to new data, the topological entropy of the target's decision boundary $H_T(\partial \calF)$ must satisfy the following strict inequality:
    \begin{equation}
        H_T(\partial \calF) \le \frac{1}{\gamma(d)} \left[ H_S(\calD) + S_{vN}(\rho) \right] + \mathcal{O}\left(\frac{1}{N}\right)
    \end{equation}
    where $N$ is the total number of training samples, and $\gamma(d)$ is the dimensional geometric factor.
\end{theorem}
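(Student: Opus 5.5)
The argument is a chain of inequalities linking Lemma~\ref{lem:info_bound} and Lemma~\ref{lem:dpi} through the label information that "successful internalization" requires. First, I would make the hypothesis usable by formalizing "internalizes $f^*$ and generalizes smoothly" as a quantitative condition. The natural choice is that the network output retains essentially all label information, $I(X; f_\theta(X)) \ge I(X;Y) - \epsilon_N$. Here $\epsilon_N$ is a generalization slack. I would bound it through a Fano-type inequality applied to the test error together with a standard concentration estimate over the $N$ samples, aiming for $\epsilon_N = \mathcal{O}(1/N)$. With this, Lemma~\ref{lem:info_bound} gives $\gamma(d) H_T(\partial \calF) \le I(X;Y) \le I(X; f_\theta(X)) + \epsilon_N$.

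Next, I would pass from $I(X; f_\theta(X))$ to the two quantities in the horizon. Because $f_\theta$ is obtained from $\calD$ through training, the Markov structure $\calD \to \theta \to f_\theta(X)$ lets me split the information available to the output into two sources. The first is information flowing through the weights, which Lemma~\ref{lem:dpi} caps at $S_{vN}(\rho) + C_{arch}$. The second is information carried by the input itself, which is capped by the entropy of the data manifold, $H_S(\calD)$. Concretely, I would write
\[
I(X; f_\theta(X)) \le H(f_\theta(X)) \le H_S(\calD) + I(\calD; f_\theta(X)),
\]
and then apply Lemma~\ref{lem:dpi} to the last term.

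Combining the two steps and dividing by $\gamma(d)$ yields
\[
H_T(\partial \calF) \le \frac{1}{\gamma(d)}\left[ H_S(\calD) + S_{vN}(\rho) \right] + \frac{C_{arch} + \epsilon_N}{\gamma(d)}.
\]
To match the stated form, the constant $C_{arch}/\gamma(d)$ must either be absorbed into the normalization of $S_{vN}$ (it is defined only up to an additive constant through $\det\Sigma$) or be shown to vanish, leaving only the $\mathcal{O}(1/N)$ remainder.

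I expect two obstacles, both in the middle step. First, the bound $H(f_\theta(X)) \le H_S(\calD) + I(\calD; f_\theta(X))$ mixes differential and discrete entropies, so it needs care: $H_S$ can be negative, and $H(f_\theta(X))$ is discrete when $\calY$ is finite. I would try working with quantized versions of $X$ at resolution $\epsilon$, which is consistent with how $H_T$ is itself defined through an $\epsilon \to 0$ limit. Second, the $\mathcal{O}(1/N)$ rate for $\epsilon_N$ is optimistic, since standard uniform-convergence arguments give $\mathcal{O}(1/\sqrt{N})$. I would try to obtain the faster rate from a realizable (zero-training-error) assumption, where fast rates are available. Failing that, the remainder would honestly have to be stated as $\mathcal{O}(1/\sqrt{N})$.
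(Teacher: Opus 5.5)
Your chain has the same skeleton as the paper's proof: Lemma~\ref{lem:info_bound} supplies the lower bound, $H_S$ and Lemma~\ref{lem:dpi} supply the upper bounds, and you divide by $\gamma(d)$. It is more careful in two respects. You actually use the success hypothesis, through $I(X;f_\theta(X)) \ge I(X;Y)-\epsilon_N$, and you apply each lemma to the quantity it is stated for. The paper instead posits $I(f_\theta;f^*) \le I(\calD;f^*) + I(\rho;f^*)$. It then applies Lemma~\ref{lem:info_bound} to $I(\calD;f^*)$ and Lemma~\ref{lem:dpi} to $I(\rho;f^*)$, never bounds the left side from below, and inserts the $\mathcal{O}(1/N)$ term without derivation.

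Still, your proposal does not reach the stated inequality, and the paper contains no idea that closes the gaps you flagged. The decisive gap is the term $C_{arch}/\gamma(d)$. It does not depend on $N$, so it is not $\mathcal{O}(1/N)$, and the paper's step~5 (``absorbing the structural constant $C_{arch}$ into our asymptotic error term'') is simply invalid. Your renormalization alternative amounts to rescaling $\theta\mapsto c\,\theta$, which shifts $\tfrac12\log((2\pi e)^K\det\Sigma)$ by $K\log c$, until $C_{arch}=0$. That makes the bound hold by a choice of units, not for $S_{vN}$ as defined. Lemma~\ref{lem:dpi} gives you no handle for proving $C_{arch}=0$ outright. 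On the rate: Fano gives $\epsilon_N = h(P_e)+P_e\log(|\calY|-1)$, with $h$ the binary entropy. So even a realizable $P_e=\mathcal{O}(1/N)$ only yields $\mathcal{O}(\log N/N)$.

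Your middle step also fails as written. In the paper's setting $\theta\sim\rho$ is random given $\calD$, and the chain rule then only gives $H(f_\theta(X)) \le H(X) + I(f_\theta(X);\theta\mid\calD) + I(\calD;f_\theta(X))$. The middle term is an extra weight-randomness contribution that Lemma~\ref{lem:dpi} does not control. To see your two-term version fail, take $X$ constant and $\theta$ random but independent of $\calD$. Quantization does not recover $H_S$ either: $H(X_\epsilon) = H_S + d\log(1/\epsilon) + o(1)$ diverges, and normalizing by $|\log\epsilon|$ as in $H_T$ produces $d$, not $H_S$. The paper's step $I(\calD;f^*)\le H_S(\calD)$ has the same defect, since a differential entropy can be negative while mutual information cannot.

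Finally, in the discrete reading your middle step needs, $I(X;Y)\le H(X)$ holds directly. So Lemma~\ref{lem:info_bound} alone gives $\gamma(d)H_T\le H(X)$ for every target and every network, with no $C_{arch}$ at all. The stated form then follows only if $S_{vN}\ge 0$, which fails whenever $\det\Sigma<(2\pi e)^{-K}$. The paper's step~3 collapses the same way, to $\gamma H_T\le I(\calD;f^*)\le H_S$. In both arguments, then, the learnability hypothesis and $S_{vN}$ do no real work. The theorem as stated follows from the lemmas only under an added assumption: either $C_{arch}=0$, or $S_{vN}\ge 0$ together with a discrete reading of $H_S$.
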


\begin{proof}
    \textbf{1. Generalization Requirement:} Following standard PAC-Bayes bounds, the population risk is bounded by the empirical risk plus a complexity penalty that scales with the KL divergence between the trained weight distribution $\rho$ and an initial prior $\rho_0$. 
    
    \textbf{2. Information Conservation:} The total information the network can possibly absorb about our target function cannot exceed the information carried by the dataset itself, added to the information it can store within its weights:
    \begin{equation}
        I(f_\theta; f^*) \le I(\calD; f^*) + I(\rho; f^*)
    \end{equation}
    
    \textbf{3. Applying the Lemmas:} 
    From Lemma \ref{lem:info_bound}, mapping the target boundary requires a baseline information profile of $I(\calD; f^*) \ge \gamma(d) H_T(\partial \calF)$.
    From Lemma \ref{lem:dpi}, the maximum information the network can actively encode is capped at $I(\rho; f^*) \le S_{vN}(\rho) + C_{arch}$.
    Meanwhile, the total information available from the dataset is naturally bounded by its Shannon entropy: $I(\calD; f^*) \le H_S(\calD)$.
    
    \textbf{4. Synthesizing the Bound:} 
    Plugging these individual limits back into our information conservation baseline gives us:
    \begin{equation}
        \gamma(d) H_T(\partial \calF) \le H_S(\calD) + S_{vN}(\rho) + C_{arch} + \mathcal{O}(1/N)
    \end{equation}
    
    \textbf{5. Rearranging Terms:} 
    Dividing through by $\gamma(d)$ and absorbing the structural constant $C_{arch}$ into our asymptotic error term leaves us with the final bottleneck inequality:
    \begin{equation}
        H_T(\partial \calF) \le \frac{1}{\gamma(d)} \left[ H_S(\calD) + S_{vN}(\rho) \right] + \mathcal{O}\left(\frac{1}{N}\right)
    \end{equation}
    This completes the proof.
\end{proof}

% ==================== 5. ENTROPIC PHASE TRANSITIONS ====================
\section{Entropic Phase Transitions and the Mechanics of ``Grokking''}
\label{sec:grokking}

One of the most fascinating puzzles in deep learning is ``grokking''---a phenomenon where a network completely overfits and memorizes a small dataset early on, only to suddenly ``grok'' the underlying logic after long periods of extra training, achieving perfect generalization out of nowhere \cite{power2022grokking}. Our entropic framework offers a clean physical explanation for this mystery: grokking is an \textbf{Entropic Phase Transition}.

\subsection{The Free Energy of the Loss Landscape}
We can track training dynamics by looking at the free energy $\mathcal{F}$ of the loss landscape:
\begin{equation}
    \mathcal{F} = \calL_{emp}(\theta) - T_{eff} S_{vN}(\rho)
\end{equation}
where $\calL_{emp}$ represents the empirical training loss, and $T_{eff}$ acts as the effective temperature injected by the SGD optimizer. Early in training, the network aggressively minimizes $\calL_{emp}$ by plunging into sharp, narrow valleys where $S_{vN}(\rho)$ remains low. If the target function is highly complex, the model gets stuck here, falling into the Glassy Memorization Phase (Informational Frustration).

\subsection{The Entropic Release (Grokking)}
However, if training continues and the optimizer maintains a high enough effective temperature $T_{eff}$, the system undergoes a sudden phase shift. It gathers enough energy to escape those narrow, restrictive traps and drifts into broad, flat basins. Within these wide valleys, the weight distribution $\rho$ becomes highly diffuse, causing the von Neumann entropy $S_{vN}(\rho)$ to spike.

Looking back at Theorem \ref{thm:bottleneck}, this rapid surge in $S_{vN}(\rho)$ expands the Entropic Learnability Horizon $\Lambda$. Suddenly, the network has the structural flexibility it needs to resolve the complex topology $H_T$ cleanly, without resorting to noisy, point-by-point memorization. We call this sudden unlock an \textbf{Entropic Release}. Grokking, then, is a thermodynamic trade-off: the network deliberately sacrifices precision in its weights to satisfy the topological demands of the task.

% ==================== 6. ALGORITHMIC CONSEQUENCES ====================
\section{Algorithmic Consequences: Entropic Gradient Descent (EGD)}
\label{sec:egd}

If learnability is fundamentally bound by the von Neumann entropy of a model's weights, we should be able to actively manage this entropy during training to steer clear of Informational Frustration.

\subsection{The Entropic Regularizer}
To test this, we modify the traditional empirical loss function by introducing an explicit entropic penalty:
\begin{equation}
    \calL_{EGD}(\theta) = \calL_{emp}(\theta) - \lambda \max(0, S_{target} - S_{vN}(\rho))
\end{equation}
where $\lambda$ serves as our regularization scalar, and $S_{target}$ represents an optimal target entropy level estimated from the topological complexity of the data.

\subsection{Entropic Gradient Descent (EGD) Algorithm}
To optimize $\calL_{EGD}$ in real time, we keep a running estimate of the weight covariance matrix $\Sigma$. The gradient of the von Neumann entropy with respect to this covariance matrix evaluates to $\nabla_\Sigma S_{vN} = \frac{1}{2} \Sigma^{-1}$. This gives us the step-by-step update rule for \textbf{Entropic Gradient Descent (EGD)}:

\begin{enumerate}
    \item \textbf{Standard SGD Step:} $\theta_{t+1/2} = \theta_t - \eta \nabla \calL_{emp}(\theta_t)$
    \item \textbf{Covariance Tracking:} $\Sigma_{t+1} = (1 - \beta)\Sigma_t + \beta (\theta_{t+1/2} - \bar{\theta})(\theta_{t+1/2} - \bar{\theta})^T$
    \item \textbf{Entropic Correction:} To push the system toward higher $S_{vN}$, we introduce a corrective step scaled by the inverse covariance:
    \begin{equation}
        \theta_{t+1} = \theta_{t+1/2} + \eta \lambda \Sigma_{t+1}^{-1} (\theta_{t+1/2} - \bar{\theta})
    \end{equation}
\end{enumerate}

By executing this correction, EGD actively drives the weights toward wide, high-entropy configurations, ensuring the network stays safely above the Entropic Learnability Horizon throughout its entire training run.

% ==================== 7. DISCUSSION ====================
\section{Discussion: The Thermodynamic Nature of Intelligence}
\label{sec:discussion}

Formulating the Entropic Learnability Horizon forces a fundamental paradigm shift in how we conceptualize artificial intelligence. For years, machine learning has been viewed primarily through the lens of continuous function approximation. The Shannon-Topological Bottleneck Theorem reveals that beneath the surface, learning is intrinsically a \textbf{thermodynamic process}.

\subsection{The Illusion of Infinite Scaling}
Modern AI development operates on a simple doctrine: scale up the data, add more parameters, and performance will inevitably climb. Our theory introduces a stark, physical reality check to this assumption. If a target task presents a decision boundary whose topological entropy outstrips the physical capacity of the hardware to maintain weight entropy (such as meeting the Bekenstein bound of GPU memory \cite{bekenstein1981universal}), scaling will fail. Adding more parameters blindly will not help; true scaling requires expanding the architectural \textit{entropic flexibility} of the network itself.

\subsection{Adversarial Vulnerability as an Entropic Deficit}
This framework also sheds light on why deep networks are so fragile under adversarial attacks. When a network suffers from Informational Frustration, it builds a fractured, hyper-convoluted boundary ($H_T$ spikes) just to force the training data into memory. This erratic boundary leaves vast, unstable pockets in the input space where the classification threshold sits dangerously close to legitimate data points. An adversarial nudge easily pushes an input across these fragile lines. Conversely, a network operating comfortably within its ELH naturally constructs a smooth, low-entropy boundary—making it inherently resilient.

% ==================== 8. CONCLUSION ====================
\section{Conclusion}
\label{sec:conclusion}

In this paper, we introduced the Entropic Learnability Horizon, a theoretical framework unifying the Shannon entropy of data, the topological entropy of decision boundaries, and the von Neumann entropy of network weights. By proving the Shannon-Topological Bottleneck Theorem, we established that a model's ability to learn is fundamentally bounded by the entropic resources available to it.

We showed that crossing these limits triggers an Entropic Phase Transition into Informational Frustration, locking the network into memorization. Conversely, we demonstrated that grokking is an elegant Entropic Release. Finally, we introduced Entropic Gradient Descent to actively manage weight entropy during optimization. By re-framing machine learning as a thermodynamic balancing act, we open a new door to understanding the ultimate limits of artificial intelligence.

% ==================== REFERENCES ====================

\end{document}